\theoremstyle{plain}
\newtheorem{theorem}{Theorem}[section]
\newtheorem{lemma}[theorem]{Lemma}
\theoremstyle{definition}
\theoremstyle{remark}
\newcommand{\ie}{\emph{i.e.,} }
\newcommand{\eg}{\emph{e.g.,} }
\newcommand{\paratitle}[1]{\vspace{1.5ex}\noindent\textbf{#1}}
\newcommand{\tabincell}[2]{\begin{tabular}{@{}#1@{}}#2\end{tabular}}
\newcommand{\revision}[1]{\textcolor{black}{#1}}
\icmltitlerunning{Submission and Formatting Instructions for ICML 2025}
\begin{document}

\twocolumn[
\icmltitle{Irrational Complex Rotations Empower Low-bit Optimizers}




\icmlsetsymbol{equal}{*}

\begin{icmlauthorlist}
\icmlauthor{Zhen Tian}{yyy}
\icmlauthor{Wayne Xin Zhao}{yyy}
\icmlauthor{Ji-Rong Wen}{yyy}
\end{icmlauthorlist}

\icmlaffiliation{yyy}{Gaoling School of Artificial Intelligence, Remin University of China}

\icmlcorrespondingauthor{Wayne Xin Zhao}{batmanfly@gmail.com}

\icmlkeywords{Machine Learning, ICML}

\vskip 0.3in
]



\printAffiliationsAndNotice{} 

\begin{abstract}

In this paper, we propose a novel optimizer state compression algorithm, namely \textbf{$\pi$-Quant}, which leverages the properties of irrational numbers (\eg $\pi$) for memory-efficient training. 
The core idea is based on our mathematical findings, which show that a pair of parameters can be represented by a single rotation angle using the complex rotation scheme.
Building on this insight, we map the parameters into a complex space and perform quantization using the corresponding rotation angles. 
To efficiently integrate it into optimization process, we develop an efficient system of geometric equations that computes the precise rotation angles with linear complexity.
\revision{We evaluate $\pi$-Quant on  a wide range of tasks. Our experiments show that it can reduce the bit-width of parameters to 3.32-bit, achieving a 75\% reduction in parameter scale and a 40\% decrease in GPU memory usage, all while maintaining full accuracy.} 


\end{abstract}

\section{Introduction}
\label{sec:intro}
The rapid growth of large-scale AI models (\eg large language models~\cite{zhao2023survey}) has led to significant improvements in various tasks. 
\revision{
However, larger models often require substantially more computational power and memory resources to train.
Typically, most models are trained with momentum-based optimizers such as Adam~\cite{Adam2015}, which require the storage of one or two optimizer momentum for each individual parameter. 
As the parameter scale increases, the optimizer states consume the majority of memory, with first-order and second-order momentum tensors in the Adam accounting for 66\% of parameter usage~\cite{li2023memory}.}  Therefore, finding a solution to develop a memory-efficient optimizer is a critical challenge.



Considering the above issues, a number of studies~\cite{dettmers20158, li2023memory} propose using lower-precision representations for \revision{the optimizer states}, as shown in Table~\ref{tab:introcom}.
They primarily analyze the frequently occurring values of the optimizer states, and then use binary search to quantize all state elements to these values.
While previous attempts have demonstrated the promise of this approach, two major challenges remain.
First, 
most deep learning frameworks (\eg Pytorch~\cite{paszke2019pytorch}, TensorFlow~\cite{abadi2016tensorflow}) do not support such search quantization operations, these methods require additional compilation of GPU-supported operators.
As a result, these methods are not directly compatible with devices other than GPUs  (\eg CPU or TPU). 
Second, these methods are specifically designed for certain bit-width quantization, making it difficult to adapt them to other bit-widths. Since these predefined precisions may not be well-suited to all tasks, they can lead to performance degradation in certain cases. 

\begin{table}[!t]
  \centering
  \small
  \captionsetup{font={small}}
  \caption{Comparison of different quantization methods. Here, $n$ denotes the parameter size, and $m$ denotes the precision size.}  
  \label{tab:introcom}
  \resizebox{1.\columnwidth}{!}{
  \begin{tabular}{c|ccc}
    \toprule
    \textbf{Metric} & \textbf{Bnb (2021)} & \textbf{Lpmm (2023)} & \textbf{$\pi$-Quant (ours)} \\
    \hline \hline
    Bit-width & 8 & 4 & 3.32\\
    Complexity & $O(n\log m)$ & $O(n\log m)$ & $O(n)$\\
    Full Acc.  & \CheckmarkBold & \XSolidBrush & \CheckmarkBold\\
    \bottomrule
\end{tabular}}
\vspace{-2em}
\end{table}

To address these issues, we aim to develop a \emph{precise} and \revision{\emph{theoretically-guaranteed}} compression algorithm that can flexibly reduce \revision{the memory usage of optimizer states} within a typical model optimization framework.
Our key idea is grounded in an important mathematical property: $\forall \{(x, y) | x^2 + y^2 <= 4\}, \exists \theta \in \mathbb{R}, x + iy = e^{i\theta} + e^{i\pi \theta}$.
This enables the \emph{precise} representation of a parameter pair $(x, y)$ with a single complex rotation angle (\ie $\theta$), \revision{by harnessing the properties of irrational numbers (\ie $\pi$).} 
As a result, the parameter scale is halved with no loss of precision. 
Further, we can quantize the rotation angles to achieve even greater memory reduction. 
To implement our idea, a fundamental challenge is integrating the proposed complex rotation scheme into the optimization process. 
Since the optimizers are often designed for real-valued parameters, they are not directly compatible with our complex transformation.
Further, it is also difficult to find an efficient solution for accurately computing the rotation angles of the corresponding parameters during optimization.


Motivated by our mathematical findings, in this paper, we propose a novel optimizer state compression approach with irrational complex rotation, named \textbf{$\pi$-Quant}.
The major contribution of $\pi$-Quant is  the introduction of a precise theoretical framework for parameter compression, resulting in lower memory costs during model training.
Specifically, $\pi$-Quant involves two key techniques. 
First, it develops a new representation mechanism that maps the parameters in a complex space.
On this basis, the complex-valued parameters are transformed into single rotation angles according to our proposed mathematical formula, which precisely halves the parameter scale. 
Notably, it employs an efficient system of geometric equations that can compute the precise rotation angles with linear complexity.
Second, it features \revision{an effective quantization algorithm that reduces the precision of the rotation angles during training}, all while preserving full accuracy. 
Our theoretical framework is particularly robust for numerically sensitive parameters, offering lower quantization errors compared to prior methods. Generally, $\pi$-Quant can be seamlessly integrated into existing optimization pipeline, which largely reduces the training memory footprint, with minimal impact on the training speed.

The main contributions are summarized as follows:

$\bullet$ 
We propose an effective optimizer state compression approach, named $\pi$-Quant, by introducing a new mathematical formula. This method transforms a parameter pair into a univariate rotation equation with irrational coefficients, enabling precise compression by halving  the parameter scale. To the best of our knowledge, we are the first to leverage the properties of irrational numbers for model compression.


$\bullet$ We propose an efficient system of geometric equations for optimizing the compression process, which is capable of computing the precise rotation angle with linear complexity.
\revision{Additionally, we develop an effective quantization algorithm that reduces the representation precision of the rotation angles, while maintaining full accuracy.}

$\bullet$ 
Experimental results demonstrate that $\pi$-Quant is an effective quantizer, capable of reducing the bit-width of optimizer states to 3.32 bits. 
For example, it can reduce the training memory of TinyLlama from 19.47 G to 11.32 G, with comparable accuracy.
Besides, it consistently outperforms several state-of-the-art quantization methods on a wide range of tasks, highlighting the effectiveness of our approach.
\section{Preliminaries}
In this section, we give a brief introduction to the quantization techniques, and then describe the complex rotation technique used in our approach.

\paratitle{Training-Oriented Quantization}. Quantization is a compression technique that maps high-precision values to a finite set of discrete values. 
The quantization methods can be generally divided into \emph{inference-oriented} and \emph{training-oriented} according to their application scenarios.
Typically, inference-oriented methods often conduct post-training quantization of the backbone network (\eg dense layers) for reducing inference costs.
For example, the INT8 method partitions the range of well-trained FP32 parameters into 256 equal length segments, and then map them into 8-bit integers based on their respective magnitudes.
Despite the progress, these methods are not practical to be applied into training pipeline.
Because these methods fail to effectively fit the non-uniform distributed parameters, they often suffer from significant accuracy loss during optimization.

Unlike inference-oriented quantization, training-oriented methods~\cite{dettmers20218, li2023memory} often focus on reducing the precision of optimizer states to achieve lower training memory. 
For better fitting the non-uniform parameter distributions, their basic idea is to sample high-frequency elements to construct a non-uniform quantization range $D$, \eg $D = \{0.12, 0.25, 0.5, 1.0\}$.
\revision{Due to such non-uniformity, these methods cannot directly compute the quantization indices and typically require a search-based method to perform the quantization:}  
\begin{align}\label{eq:pre}
\begin{aligned}
\textbf{X}^{INT}_i = \arg \min_{\theta \in D} ||\theta - \textbf{X}^{FP32}_i||.
\end{aligned}
\end{align}
In practice, these methods often require compiling additional GPU kernels to perform parallel searching.
Besides, the certain bit-width setting of $D$ cannot be extended to other bit-widths.
As a training-oriented method, our approach focus on reducing the memory of optimizers, which pushes the lowest achievable bit-width (4-bit) down to 3.32-bit, with higher accuracy and no need for a search process.

\paratitle{Complex Rotation}. In mathematics, complex rotation is a mathematical operation that applies a rotation in the complex plane. 
It is represented by multiplying a complex number of the form
\ie $\cos\theta + i \sin\theta$, where $i$ is the imaginary unit ($i^2 = -1$) and $\theta$ is the rotation angle.
According to Euler's formula, a complex number can be written as:
\begin{align}\label{eq:com}
\begin{aligned}
\cos\theta + i \sin\theta = e^{i\theta}.
\end{aligned}
\end{align}

This equation establishes a mapping from $\{(x, y) | x^2 + y^2 = 1\}$  to $\theta \in \mathbb{R}$. 
\revision{In our paper, we reveal a more profound theorem, which shows that arbitrary pair of $(x, y)$ can be represented by an angle $\theta$ in this complex rotation scheme.} 
As such, it provides a new solution to compress the parameters.
\section{Methodology}
\label{sec:method}
In this section, we introduce the proposed $\pi$-Quant method (illustrated in Figure~\ref{fig:model}) for achieving memory-efficient parameter optimization. Our method is grounded in a key mathematical property: each real vector can be split into two components, which can then be uniquely represented by a rotation angle. Building on this idea, we propose a geometric approach for performing this transformation, enabling the computation of the rotation angle with linear complexity. 
Moreover, we design a quantization pipeline to implement our method, which significantly reduces the parameter scale and can be seamlessly integrated into \revision{existing optimization algorithms}. We begin by introducing the theoretical framework of $\pi$-Quant, followed by a description of its practical application pipeline.



\begin{figure}[ht]
\vskip 0.2in
\begin{center}
\centerline{\includegraphics[width=\columnwidth]{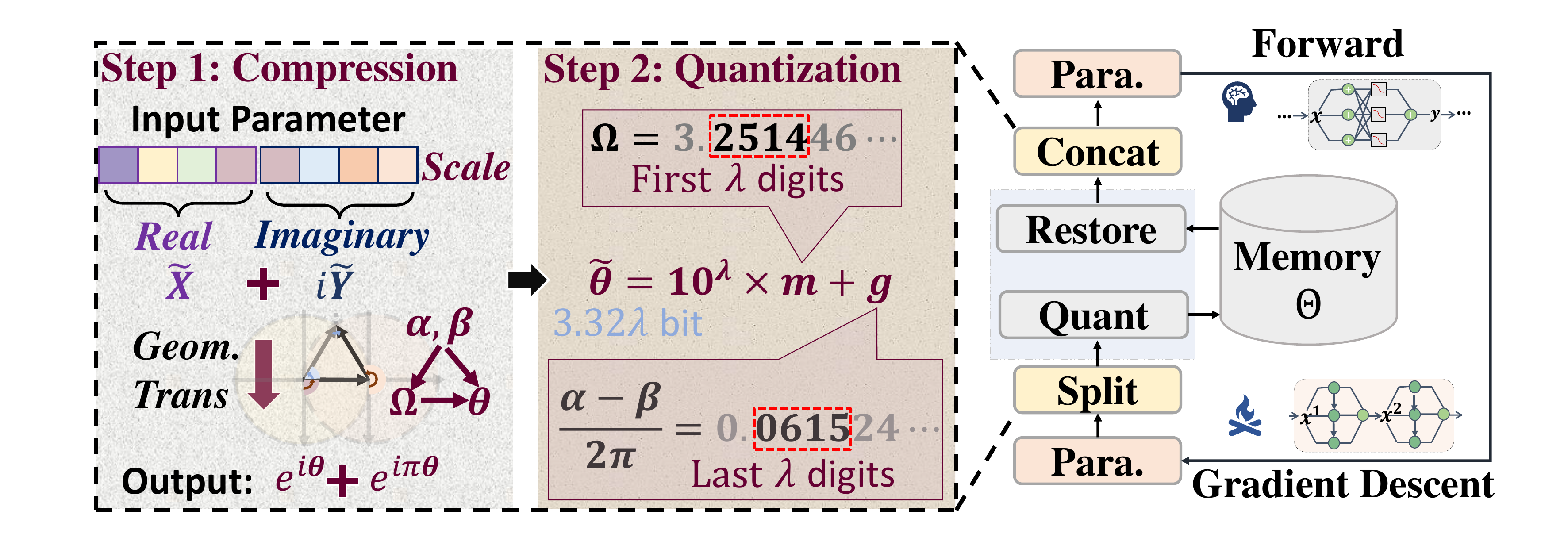}}
\caption{The overall framework of $\pi$-Quant.}
\label{fig:model}
\end{center}
\vskip -0.2in
\end{figure}

\subsection{Parameter Compression with Irrational Rotations}~\label{sec:irr}
\noindent{The key point of our approach is a novel data compression theorem, which enables the precise representation of two variables using a univariate rotation equation with an irrational number as the coefficient. We first present the mathematical foundation behind this theorem, followed by the instantiation of our idea.} 

\paratitle{Data Compression with Irrational Numbers.} Existing machine learning frameworks typically represent model parameters using floating-point numbers (\eg FP32) with limited precision. In this representation, previous work~\cite{frantar2022gptq} often employs truncation-like methods to uniformly quantize high-precision parameters to a lower precision. The underlying assumption is that the parameters follow a uniform distribution within the quantization range. However, due to the mismatch between this assumption and the actual distribution of the parameters, most quantization methods encounter significant errors, as parameters are more likely to follow a non-uniform distribution (\eg Gaussian distribution). 
In contrast to prior approaches, we introduce a novel data compression technique that compresses parameters from the perspective of irrational numbers. Formally, we present the following theorem:


\begin{theorem}
\label{thm:irr}
Given a complex number $z = x + iy \in \mathbb{C}$ that satisfies $||z|| \leq 2$, there exists a unique angle $\theta \in \mathbb{R}$ that rewrites $z$:

\begin{align}\label{eq:irr}
    z = x + iy = e^{i\theta} + e^{i\bar{\pi}\theta},
\end{align}
where $i$ is the imaginary unit, and $\bar{\pi}$ denotes any  
irrational number (e.g., the circle ratio $\pi$). 
\end{theorem}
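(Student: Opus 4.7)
The strategy is to translate the analytic identity $z = e^{i\theta} + e^{i\bar\pi\theta}$ into a geometric construction on the unit circle, and then extract the rotation angle $\theta$ by simultaneously satisfying two phase conditions that come from the two complex exponentials.

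I would start by introducing $u = e^{i\theta}$ and $v = e^{i\bar\pi\theta}$, so that the equation becomes the vector sum $u + v = z$ with both $u$ and $v$ lying on the unit circle. Because $|z| \le 2$, the two circles $|u| = 1$ and $|u-z| = 1$ intersect, and applying the law of cosines to the triangle with vertices $0$, $u$, $z$ yields $u$ (and hence $v = z - u$) in closed form as functions of $|z|$ and $\arg z$. This step is exactly the ``system of geometric equations'' advertised in the introduction, and once a canonical branch is fixed it can be evaluated in linear time in the number of parameters.

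Next, I would recover $\theta$ from the chosen pair $(u,v)$. Writing $\alpha = \arg u$ and $\beta = \arg v$, the first constraint $e^{i\theta} = u$ forces $\theta \in \alpha + 2\pi\mathbb{Z}$, while $e^{i\bar\pi\theta} = v$ forces $\bar\pi \theta \in \beta + 2\pi\mathbb{Z}$. Substituting the first into the second gives a one-parameter family of candidates $\theta = \alpha + 2\pi k$ constrained by $\bar\pi(\alpha + 2\pi k) \equiv \beta \pmod{2\pi}$, i.e., an integer-congruence condition on $k$ involving $\bar\pi$. I would invoke Kronecker's equidistribution theorem on the orbit $\{\bar\pi k \bmod 1 : k \in \mathbb{Z}\}$, which is dense in $[0,1]$ whenever $\bar\pi$ is irrational, to exhibit a $k \in \mathbb{Z}$ meeting this congruence.

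For uniqueness, I would argue that the joint map $\theta \mapsto (e^{i\theta}, e^{i\bar\pi\theta})$ is injective on $\mathbb{R}$ when $\bar\pi$ is irrational, since any collision would force $\bar\pi$ to be rational; thus at most one $\theta$ solves the system once a canonical branch of the circle intersection is fixed (e.g.\ $\Im(u) \ge 0$). The main obstacle will be closing the gap between density and exact solvability in the previous paragraph: Kronecker only supplies $k$ with $\bar\pi k$ arbitrarily close to the target residue, not exactly equal to it. I expect most of the technical effort to go into showing that the specific algebraic coupling between $\alpha$, $\beta$, and $\bar\pi$ through $z$ actually forces an exact solution rather than merely an arbitrarily good approximant; this is the delicate step on which the strict ``$=$'' in the statement rests.
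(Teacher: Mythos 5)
Your geometric setup is sound and in fact coincides with the paper's own reduction: introducing $u=e^{i\theta}$, $v=e^{i\bar\pi\theta}$, intersecting the two unit circles, and extracting $\alpha=\arctan(y/x)$, $\beta=\arccos\bigl(\sqrt{x^2+y^2}/2\bigr)$ is exactly the system of geometric equations in Lemma~\ref{lem:geolemma}. The genuine gap is the one you flag yourself at the end, and it is not a technicality that more effort would close. After eliminating $\theta=\alpha-\beta+2\pi m$, you are left with the condition $\{m\bar\pi\}=\{\Omega\}$ for a prescribed real number $\Omega$ determined by $\alpha$, $\beta$ and $\bar\pi$, and Kronecker--Weyl only gives that $\{m\bar\pi\}$ is dense (indeed equidistributed) in $[0,1)$; it equals a prescribed value only for a countable set of targets. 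No ``algebraic coupling'' through $z$ can rescue exactness: the image of $\mathbb{R}$ under $\theta\mapsto e^{i\theta}+e^{i\bar\pi\theta}$ is a countable union of rectifiable arcs (the map is Lipschitz on each interval $[n,n+1]$), hence has planar Lebesgue measure zero and cannot contain the disk $\|z\|\le 2$. So exact existence fails for almost every admissible $z$; what your argument genuinely proves is that the image is \emph{dense} in the disk, i.e.\ every target is representable up to arbitrarily small error, which is all the quantization algorithm downstream actually uses.

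For comparison, the paper's proof takes a different route and makes the same leap you refused to make: it shows the curve $\theta\mapsto(\cos\theta+\cos\bar\pi\theta,\sin\theta+\sin\bar\pi\theta)$ is non-periodic (assuming a period $T$, differentiating, and forcing $\bar\pi=p/q$), notes the bound $x^2+y^2=2+2\cos((1-\bar\pi)\theta)\le 4$, and then asserts that a bounded, continuous, non-closed trajectory ``traverses every point'' of the disk --- precisely the density-versus-exactness jump, stated without justification. On uniqueness, your injectivity claim for the pair map $\theta\mapsto(e^{i\theta},e^{i\bar\pi\theta})$ is correct for irrational $\bar\pi$, but uniqueness of $\theta$ given only $z$ also needs that only one of the two circle-intersection branches is realized, and this fails at self-intersection points of the curve: choosing $\theta_2=2\pi/(\bar\pi^2-1)$ and $\theta_1=\bar\pi\theta_2$ gives $\theta_1\ne\theta_2$ with $e^{i\theta_1}=e^{i\bar\pi\theta_2}$ and $e^{i\bar\pi\theta_1}=e^{i\theta_2}$, hence the same $z$. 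In short, your proposal stalls exactly where the theorem as stated is actually false; the provable statement is density of the representation together with an explicit approximation error, which is also what the paper's construction of $m$ in Eq.~\eqref{eq:m} implicitly relies on.
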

Specifically, it shows that any vector of two real values $x$ and $y$ can be represented by a real-valued angle $\theta$ in this complex scheme.  This property extends to any parameter tensor $\mathbf{T}$, which can be split and linearized into two equal-sized matrices, \ie $(\bm{X}, \bm Y) = \text{Split}(\mathbf{T})$, corresponding to the real part and imaginary part, respectively.
In this manner, we can represent $\mathbf{T}$ by a specific angle vector $\bm{\Theta}$, using half the number of real values.
A proof of this theorem is provided in Appendix~\ref{sec:app1}.




\paratitle{Geometric Solutions.} 
Finding the exact solution to Eq.~\eqref{eq:irr} is the key challenge of our approach. 
Intuitively, one might consider using a search-based method (\eg KDTree), where  the $x$ and 
$y$ values corresponding to all $\theta$ angles within the desired precision are stored in a cached array, and the array is then searched for the coordinates closest to the target $(x, y)$.
However, this approach often incurs $O(n \log m)$ complexity and complexity and presents challenges in performing parallel searches on a GPU. 
As shown in Figure~\ref{fig:model}, to efficiently solve the Eq.~\eqref{eq:irr}, we propose a geometric solution that enables faster implementation: 
\begin{lemma}
\label{lem:geolemma}
Given the input complex number $x + iy$, the solution $\theta$ of Eq.~\eqref{eq:irr} can be given by the following system of geometric equations:
\begin{align}\label{eqs:geo}
\left\{
\begin{aligned}
\alpha &= \arctan(\frac{y}{x}) \\
\beta &= \arccos(\frac{\sqrt{x^2 + y^2}}{2}) \\
\theta &= \alpha - \beta + 2m\pi , ~m \in \mathbb{Z}\\
\bar{\pi}\theta &= \alpha + \beta + 2k\pi,~k \in \mathbb{Z}  
\end{aligned}
\right.
\end{align}
\end{lemma}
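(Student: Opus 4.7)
My starting point is the sum-to-product identity for complex exponentials,
\[
e^{iu} + e^{iv} \;=\; 2\cos\!\Bigl(\tfrac{v-u}{2}\Bigr)\, e^{i(u+v)/2},
\]
which encodes the elementary geometric fact that the parallelogram spanned by two unit vectors in the complex plane is a rhombus, and its diagonal bisects the angle between them. Applying this with $u = \theta$ and $v = \bar{\pi}\theta$ converts Eq.~\eqref{eq:irr} into the single polar identity
\[
x + iy \;=\; 2\cos\!\Bigl(\tfrac{(\bar{\pi}-1)\theta}{2}\Bigr)\, e^{i(1+\bar{\pi})\theta/2}.
\]

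\noindent The next step is to put $x+iy$ into polar form. Writing $r = \sqrt{x^{2}+y^{2}}$ and letting $\alpha$ denote the principal argument of $x+iy$ (so $\tan\alpha = y/x$ on the appropriate branch), matching moduli and arguments separately yields the pair of scalar equations
\[
r \;=\; 2\cos\!\Bigl(\tfrac{(\bar{\pi}-1)\theta}{2}\Bigr), \qquad \tfrac{(1+\bar{\pi})\theta}{2} \;\equiv\; \alpha \pmod{2\pi}.
\]
Defining $\beta := \arccos(r/2)$, which is well defined and lies in $[0,\pi/2]$ precisely because the hypothesis $\|z\|\le 2$ is already in force from Theorem~\ref{thm:irr}, solves the modulus equation as $(\bar{\pi}-1)\theta/2 = \beta + 2\pi\ell$ for some $\ell\in\mathbb{Z}$ (up to a sign that can be absorbed by swapping the roles of $\theta$ and $\bar{\pi}\theta$). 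The argument equation reads $(1+\bar{\pi})\theta/2 = \alpha + 2\pi n$ for some $n\in\mathbb{Z}$.

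\noindent Then I would add and subtract these two linear relations to recover exactly the system in the lemma: subtraction gives $\theta = \alpha - \beta + 2\pi m$ with $m = n - \ell$, and addition gives $\bar{\pi}\theta = \alpha + \beta + 2\pi k$ with $k = n + \ell$. Conversely, any $(\theta,m,k)$ satisfying these two relations reinserts into the sum-to-product identity to reproduce $x+iy$, closing the equivalence.

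\noindent The main obstacle I anticipate is careful bookkeeping of branches. The function $\arctan$ returns values only in $(-\pi/2, \pi/2)$, so the identification of $\alpha$ with the genuine argument of $x+iy$ needs a quadrant correction when $x<0$; similarly, the cosine appearing in the modulus equation is in principle signed, and the resulting sign ambiguity has to be absorbed either by a $2\pi$ shift or by swapping the two summands. These are mechanical once enumerated, but are the places where the argument could easily be stated in a misleading form. The deeper question of existence and uniqueness of a real $\theta$ realising both equations simultaneously, which crucially depends on the irrationality of $\bar{\pi}$, is already handled by Theorem~\ref{thm:irr}, so no new input is required at that stage.
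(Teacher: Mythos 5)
Your proof is correct, and it arrives at the same system the paper does, but by a different route: the paper's proof is purely geometric, reading $\beta = \arccos(\sqrt{x^2+y^2}/2)$ off the law of cosines applied to the isosceles triangle with vertices $0$, $e^{i\theta}$, and $z$, and then obtaining the two angle relations by angle-chasing in the figure (the diagonal of the rhombus bisects the angle between the two unit vectors; the parallel sides give the second relation). Your sum-to-product identity $e^{iu}+e^{iv} = 2\cos\bigl(\tfrac{v-u}{2}\bigr)e^{i(u+v)/2}$ is the algebraic encoding of exactly that rhombus fact, so the mathematical content coincides; what your version buys is that it is self-contained (no reliance on a figure), it makes the add/subtract recovery of the lemma's two congruences explicit, and it includes the converse check that any $(\theta,m,k)$ satisfying the system reproduces $x+iy$ — which is really the direction the algorithm needs and which the paper leaves implicit. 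You are also right to flag the two branch issues (the $\arctan$ quadrant correction when $x<0$, and the sign of the cosine factor); the paper's proof silently assumes the principal branch and the positive-cosine configuration, so your caveats identify genuine imprecision in the original statement rather than a gap in your own argument. The only thing to be careful about is your remark that the sign ambiguity can be "absorbed by swapping the roles of $\theta$ and $\bar{\pi}\theta$": since the lemma fixes which exponential carries the coefficient $\bar{\pi}$, the cleaner resolution is to absorb a negative cosine by shifting the argument by $\pi$ and replacing $\beta$ by $\pi-\beta$ (or, as you note, by a $2\pi$ shift in the integer parameters), but this is the mechanical bookkeeping you already anticipated.
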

We prove this lemma in Appendix~\ref{app:gemo}. 
As such, the task of finding the exact value of $\theta$ is transformed into: given $(x, y)$, determine the specific value of the integer $m$, such that there exists an integer $k$ satisfying the above system of equations.
By further simplifying the system of Eqs.~\eqref{eqs:geo}, our main objective becomes solving the following equation: 
\begin{align}
\{m\bar{\pi}\} &= \{\Omega\} \label{eq:solve},  \\
\Omega &= \frac{\alpha(1 - \bar{\pi}) + \beta(1 + \bar{\pi})}{2 \pi} \label{eq:omega},
\end{align}
where $\{\cdot\}$ returns the fractional part of a input number.
Here, we can first compute the exact value of $\Omega$ with linear complexity.
However, directly solving Eq.~\eqref{eq:solve} is very difficult. Intuitively, we can enumerate all possible values of $m$ and then search for the longest match with the fractional part of the target $\Omega$.
Nevertheless, the time complexity of the search process is still extremely high and not practical.
As our solution, we selectively mask certain digits of $\pi$.
Specifically, the value of $\bar{\pi}$ in our approach is constructed as:
\begin{align}\label{eq:ppi}
\bar{\pi} = 0.\underbrace{000\cdots}_{\lambda-1~\text{times}}1\underbrace{0000\cdots}_{\lambda~\text{times}}\underbrace{3589793238\cdots}_{\text{same as}~\pi}, 
\end{align}
where $\lambda$ is a quantization hyper-parameter.
We can guarantee that $\bar{\pi}$ is still an irrational number, as it can be obtained by adding $\pi$ to some rational numbers.
In this way, we can approximate $m$ to the first $\lambda$ decimal places of $\Omega$:
\begin{align}\label{eq:m}
m \approx \lfloor\{\Omega\} \times 10^{\lambda}\rfloor
\end{align}
where $\lfloor \cdot \rfloor$ represents the floor function.
We give an intuitive explanation of our construction in Appendix~\ref{app:exp}.
This solution guarantees that the error does not exceed $10^{-\lambda}$.
The entire process maintains linear complexity.

\subsection{Optimization with Rotation Quantization}
In the previous section, we discussed how to compress a pair of real-value parameters using a single rotation variable $\theta$.
To further reduce the memory usage, we aim to use a low-precise representation for quantizing the corresponding $\theta$.
Next, we will introduce the details of our framework.

\paratitle{Quantize the Rotation Angles.}
As introduced in Section~\ref{sec:irr}, our approach use the complex rotation scheme 
to represent parameter pairs. One limitation of this method is that it requires the magnitudes of  $x$ and $y$ (\ie $||x^2 + y^2||$) to be no more than 4. 
To address this, we first scale and normalize the parameters to ensure they lie within the representable range. Formally, given the split input matrices $(\bm{X}, \bm{Y}) = \text{Split}(\mathbf{T})$, the scaled tensor can be computed by a simple method:

\begin{align}
\Tilde{\bm{X}} &= \bm{X} / {w},
~\Tilde{\bm{Y}} = \bm{Y} / {w} \label{eq:scale}
\end{align}
where $w$ is the maximum absolute value of the numbers in both $\bm{X}$ and $\bm{Y}$.
Using this method, the value ranges of $\Tilde{\bm{X}}$ and $\Tilde{\bm{Y}}$ are both transformed to $[-1,1]$, satisfying the mapping conditions in Theorem~\ref{thm:irr}.
Note that we can also constrain them to a larger range (\eg $[-\sqrt{2}, \sqrt{2}]$) by multiplying $w$ by a coefficient.
Empirically, this setting has already achieved a good trade-off in quantization deviation, which we will discuss in Section~\ref{sec:dis}.
Afterwards, we compress the $\Tilde{\bm{X}}$ and $\Tilde{\bm{Y}}$ to a unique $\bm \Theta$, following the method introduced in Section~\ref{sec:irr}.
For further quantizing $\theta$,  we retain $\lambda$ decimal places of $\alpha - \beta$ (See Eq.~\eqref{eqs:geo}), with $10^{-\lambda}$ as the interval:

\begin{align}~\label{eq:qua}
\begin{aligned}
\Tilde{\theta} = \frac{\theta}{2\pi 10^{-\lambda}} =  m \cdot 10^{\lambda} + g,~~
g=\lfloor\frac{\alpha - \beta}{2\pi} \times 10^{\lambda}\rfloor
\end{aligned}
\end{align}

In this way, we can represent each $\theta$ with $2\lambda$ digits, where the first $\lambda$ digits correspond to the $m$ and the remaining $\lambda$ digits correspond to the $g$.
As such, we can set varied $\lambda$ to achieve different bit-widths.
Since the $\lambda$ is set based on the decimal system, on average, each parameter requires $\lambda \cdot \log_2 10 \approx 3.32 \lambda$ bits for storage.
In particular, the entire process remains linear complexity and supports parallel computation on GPUs, making it practical in the large scale model optimization. 
We present a detailed procedure for the quantization process in Algorithm~\ref{alg:qua}.

\begin{algorithm}[h]
    \caption{Quantization of the Rotation Angles}
    \label{alg:qua}
    \begin{algorithmic}[1]
        \STATE \textbf{Input:} Parameter $\textbf{T}$, size: $n$.
        
        \STATE Split $\textbf{T}$ into two tensors: $\bm {X}$ and $\bm {Y}$.
        \STATE Compute $w = \max(|{\bm X, \bm Y}|)$. 
        \STATE Scale $\bm X$ and $\bm Y$ according to Eq.~\eqref{eq:scale}.
        \STATE Compute $\bm \alpha$ and $\bm \beta$ using Eq.~\eqref{eqs:geo}.
        \STATE Compute $\bm \Omega$ according to Eq.~\eqref{eq:omega}.
        \STATE Compute $\bm m$ based on Eq.~\eqref{eq:m}.
        \STATE Calculate $\bm{\Theta}$ from Eq.~\eqref{eq:qua}.
        \STATE \textbf{Output:} Quantized parameter $\bm{\Theta}$, size: $n/2$; Scale Factor $w$, size: 1.
    \end{algorithmic}
\end{algorithm}

\revision{Through the rotation transformation (Eq.~\eqref{eq:irr}), the parameter scale is reduced by half. After quantization, the precision of the angle representation can be further reduced. 
In practice, we can compress the precision of an angle into an integer ranging from 0 to 99 (\ie $\lambda = 1$), achieving the same quantization effect as 3.32-bit (See Table~\ref{tab:ppl}).} 
Next we discuss how to apply our algorithm during the training process.

\paratitle{Optimization.}
\revision{The main advantage of our algorithm lies in its ability to reduce the optimization memory.
Since only a small part of parameters necessitate immediate computation during training~\cite{li2023memory}, we only restore this subset of parameters, while the remainder continues to reside in memory in their angular representation.} 
As shown in Figure~\ref{fig:model}, in our framework, the parameters stored in memory are $\bm{\Theta}$ and the corresponding $w$ (See Algorithm~\ref{alg:qua}).
When the parameters are involved in computing, we apply the inverse operations of the quantization process to activate them:
\begin{align}\label{eqs:inverse}
\left\{
\begin{aligned}
 \theta &= \Tilde{\theta} \times 2\pi 10^{-\lambda},\\
\Tilde{x} &= \cos\theta + \cos\bar{\pi}\theta,~\Tilde{y} = \sin\theta + \sin\bar{\pi}\theta,\\
x &= \Tilde{x} \cdot w,~~~~~y = \Tilde{y} \cdot w. 
\end{aligned}
\right.
\end{align}
In this way, we can easily recover the parameter tensor  $\bm{X}, \bm{Y}$ before quantization, and we then concatenate them to reconstruct the original $\textbf{T} = \textrm{Concat}(\bm{X}, \bm{Y})$.
\revision{Similar to existing quantization works~\cite{li2023memory, dettmers20218}, the activated parameters maintain high precision to ensure the accuracy of the gradients and predictions. }
Since only a small subset of parameters is involved in computation most of the time, the temporary storage overhead incurred by these high-precision activated parameters is negligible.
In most scenarios, the momentum of the optimizer accounts for the majority of the training parameters (\eg the momentum in the Adam constitutes 66.67\%). The primary application of our method is compressing this portion of the parameters, which significantly reduces memory usage. The detailed algorithm is presented in Algorithm~\ref{alg:adam}. 

\begin{algorithm}[h]
\caption{Adam with $\pi$-Quant (Differences from the original Adam are highlighted in \textcolor{teal}{teal}).}
\label{alg:adam}
\begin{algorithmic}[1]
\STATE {\bfseries Input:} Learning rate $\alpha$, decay rates $\beta_1$, $\beta_2$, $\epsilon$ (small constant for numerical stability)
\STATE {\bfseries Initialize:} \textcolor{teal}{$\bm m_0 = \textrm{Quant}(\bm 0)$}, \textcolor{teal}{$\bm v_0 = \textrm{Quant}(\bm 0)$}, $t = 0$ \COMMENT {\textcolor{teal}{using Algorithm~\ref{alg:qua}}}
\FOR{each iteration $t = 1, 2, \dots, T$}
    \STATE $t \gets t + 1$
    \STATE Compute the gradient $\nabla \bm \theta_t$
    \STATE \textcolor{teal}{$\bm m_{t-1} \gets \textrm{Restore}(\bm m_{t - 1})$} \COMMENT {\textcolor{teal}{according to Eq.~\eqref{eqs:inverse}}}
    \STATE $\bm m_t \gets \beta_1 \bm m_{t-1} + (1 - \beta_1) \nabla\bm \theta_t$
    \STATE \textcolor{teal}{$\bm v_{t-1} \gets \textrm{Restore}(\bm v_{t-1})$} \COMMENT {\textcolor{teal}{according to Eq.~\eqref{eqs:inverse}}}
    \STATE $\bm v_t \gets \beta_2 \bm v_{t-1} + (1 - \beta_2) (\nabla \bm\theta_t)^2$
    \STATE $\hat{\bm m}_t \gets \frac{\bm m_t}{1 - \beta_1^t}$ \COMMENT{Bias correction for $\bm m_t$}
    \STATE $\hat{\bm v}_t \gets \frac{\bm v_t}{1 - \beta_2^t}$ \COMMENT{Bias correction for $\bm v_t$}
    \STATE $\bm \theta_t \gets \bm \theta_{t-1} - \alpha \frac{\hat{\bm m}_t}{\sqrt{\hat{\bm v}_t} + \epsilon}$ \COMMENT{Parameter update}
    \STATE \textcolor{teal}{$\bm m_{t} \gets \textrm{Quant}(\bm m_{t})$} \COMMENT {\textcolor{teal}{according to Algorithm~\ref{alg:qua}}}
    \STATE \textcolor{teal}{$\bm v_{t} \gets \textrm{Quant}(\bm v_{t})$} \COMMENT {\textcolor{teal}{according to Algorithm~\ref{alg:qua}}}
\ENDFOR
\STATE {\bfseries Output:} Optimized parameters $\bm \theta_T$
\end{algorithmic}
\end{algorithm}

Like most training-oriented quantization methods~\cite{dettmers20218, li2023memory}, $\pi$-Quant introduces additional ``Quant'' and ``Restore'' computations for each iteration, which results in higher latency compared to the original Adam.
Specifically, given the parameter size $n$, the additional complexity of $\pi$-Quant is $O(n)$ per iteration, which can be negligible compared to the overall time complexity.

\subsection{Discussion}\label{sec:dis}
In this part, we discuss the advantage of $\pi$-Quant and compare our work with existing work.

\paratitle{Model Merits.} Our method has the following merits:

$\bullet$ \emph{Quantization Precision}.
We offer in-depth analyses of the quantization accuracy in $\pi$-Quant, and demonstrate its superiority over traditional approaches.
Specifically, the main deviation of our method comes from the approximation of the $m$ and $\alpha - \beta$ in Eq.~\eqref{eq:m} and Eq.~\eqref{eq:qua}, respectively.
Formally, given the deviation of $\Delta \theta$ in Eq.~\eqref{eq:qua}, the upper bound of quantization error for a specific point $(x, y)$ is given by:
\begin{align}~\label{eq:error}
\begin{aligned}
    ||\Delta x|| &< \Delta\theta \Big|\Big| (1 - \bar{\pi})\sin\theta' + \bar{\pi}y'||  < \Delta\theta, \\
    ||\Delta y|| &<  \Delta\theta\Big|\Big| (1 - \bar{\pi})\cos\theta' + \bar{\pi}x'\Big|\Big| < \Delta\theta,
\end{aligned}
\end{align}
where $\theta' \in [\theta, \theta + \Delta \theta]$.
We give a proof in Appendix~\ref{app:pro}.
It indicates that our method can reduce the quantization error in the corresponding angle $\theta$.
Further, given the precision width $\lambda$, the average quantization error can be given by: 
\begin{align}~\label{eq:avg}
\begin{aligned}
   \mathbb{E}(\Delta x) = \mathbb{E}(\Delta y) = O(\frac{2 \cdot (1 + \bar{\pi}) \cdot 10^{-\lambda}}{\pi}).
\end{aligned}
\end{align}
The proof is in Appedix~\ref{app:avg}.
This indicates that the quantization error increases as the bit width decreases.
Note that $2 \cdot (1 + \bar{\pi}) < \pi$, which further demonstrates that our method has a lower average quantization error compared to traditional methods (\ie $10^{-\lambda}$).
In practice, we reduce the bit-width of optimizer states down to 3.32-bit (\ie $\lambda = 1$), while maintaining full accuracy (See Section~\ref{sec:lang}).



$\bullet$ \emph{Non-uniformity}. 
Non-uniformity is an essential property that ensures the accuracy of quantization for optimization, as the parameters of neural networks typically exhibit a non-uniform distribution (\eg Gaussian distribution). 
In Figure~\ref{fig:err}, we plot the quantization error distribution between traditional quantization methods and our approach.
As shown, the errors of traditional methods are uniformly distributed across the two-dimensional  plane. In contrast, our method produces smaller errors near the zero point (the center of the heatmap), with larger errors at the periphery.
Additionally, we plot the accuracy distribution for both traditional methods and our approach. It is evident that our method allocates more precision around the zero point to better capture the parameters, while assigning less precision to the range of larger values. Since most parameters and momentum in large models follow a standard distribution with a mean close to zero, our method is more effective at quantizing these parameters.

\paratitle{Novelty and Differences.} 
In Table~\ref{tab:cmp}, we compare our method with existing optimizer state quantization methods.
To the best of knowledge, it is the first attempt that leverages the properties of irrational numbers for data compression, grounded in a new mathematical formula.
This approach transforms a complex-valued parameter into a single rotation angle, which can precisely halve the parameter scale of a wide range of modules such as optimizer states and model weights.
By the proposed system of geometric equations, our approach only requires linear-complexity calculations, possessing superior theoretical properties (\eg non-uniformity), with no need of complicated search process.
In contrast to traditional quantization methods~\cite{li2023memory, dettmers20218}, which perform numerical compression for each single element, we propose to conduct the quantization for each pair unit of parameters, by reducing the precision of the rotation angles.
As such, it provides a way to better capture the correlations between different elements, making it possible to further reduce precision to lower bit-widths.
In practice, $\pi$-Quant can reduce the precision of optimizer states to 3.32-bit, which can be flexibly extended into variable bit-widths, with minimal impact on the training speed (See Table~\ref{tab:ppl}).
In general, our method provides a precise and theoretically-grounded solution for compressing the parameter scale in the training of large-scale models.


\begin{figure}[ht]
\vskip 0.2in
\begin{center}
\centerline{\includegraphics[width=0.9\columnwidth]{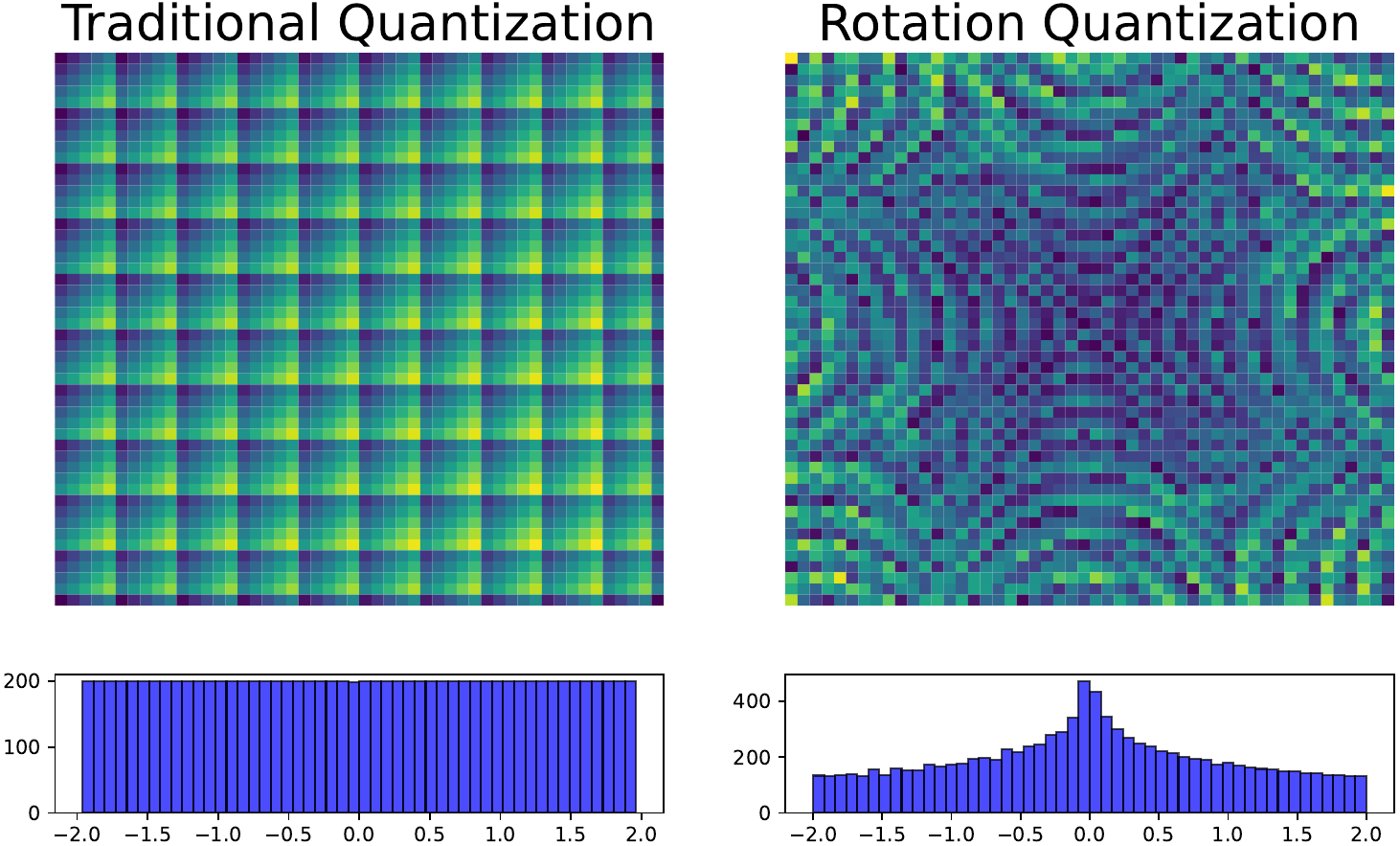}}
\caption{The error distribution (top) and precision distribution (bottom) between our method and traditional methods.}
\label{fig:err}
\end{center}
\vspace{-1em}
\end{figure}

\begin{table}[!ht]
\captionsetup{font={small}}
\small
\caption{Comparison of different quantization methods.}
\label{tab:cmp}
\resizebox{\columnwidth}{!}{
\begin{tabular}{@{}lccccc@{}}
\toprule
Methods & Bit-width & Search & Distribution  & Unit \\ \midrule
LinearQuant  & Flexible
 & No & Uniform & Single \\
Bnb.Adam    & 8 & Yes & Non-Uniform & Single \\ 
Lpmm.Adam & 4 & Yes & Non-Uniform & Single\\ 
$\pi$-Quant (ours) & 3.32~(Flexible)
 & No & Non-Uniform & Pair \\  \bottomrule
\end{tabular}
}
\end{table}



\section{Experiment}
\label{sec:exp}

\subsection{Language Modeling Evaluation Under Different Bit-Widths}
\label{sec:lang}
\paratitle{Experimental Settings.}
We conduct a comprehensive comparison of our approach with state-of-the-art quantization methods, including Linear-Quant~\cite{vanhoucke2011improving} with different bit-widths, 8bit-Adam (Bnb)~\cite{dettmers20158}, 4bit-Adam (Lpmm)~\cite{li2023memory}, and standard FP32 precision. 
Following the previous work~\cite{peng2023yarn}, we continually pre-train the TinyLlama-1.1B checkpoint~\cite{zhang2024tinyllama} for 400 steps on the  PG-19~\cite{rae-iclr-2020-compressive} dataset, chunked into 64k segments bookended with the BOS and EOS token, with a context window of $2048$.
We report the test perplexity in the Proof-pile dataset, and evaluate the trained LLM across four public benchmarks from Huggingface~\cite{eval-harness}, including ARC-Challenge, Hellaswag, Lambada and PIQA.

\begin{table*}[t]
\caption{Validation of effectiveness under different bandwidths, where the bandwidth is enclosed in parentheses``()''. The task is language modeling in PG-19 dataset. The backbone large language model is TinyLLama. ``N/A'' denotes that ``nan loss'' (training collapse).}
\label{tab:ppl}
\centering
\resizebox{1.0\linewidth}{!}{
\begin{tabular}{l|c|c|cc|ccccc}
\toprule
Type                      & Approach                           &  Memory / Step Time         & Train Loss  $\downarrow$           & Test PPL $\downarrow$ & ARC-c $\uparrow$ & Hellaswag $\uparrow$ & Lambada $\uparrow$ & PIQA $\uparrow$ & Avg. \\\midrule
Full   &  FP32 (\textbf{32}) & 19.47 GB / 9.77 s & 2.423                       &  5.17  &  32.51  & 58.65   & \underline{56.55} & 73.18 & 55.22   \\
\cmidrule(rr){1-10}
\multirow{3}{*}{High}   & FP16 (\textbf{16})           &          15.53 GB / 8.93 s          & $>10^4$                       & 105.36   &  24.57  & 24.52   &   17.19 & 49.02 & 28.83
                           \\\cmidrule(rr){2-10}
                           & Linear-Quant (\textbf{16})     & 15.53 GB / 9.79 s                               & N/A                        & N/A   &  N/A  & N/A   &  N/A & N/A & N/A 
                           \\\cmidrule(rr){2-10}
                           
                           & $\pi$-Quant (\textbf{13.28})  & 15.40 GB / 11.95 s  & \textbf{2.420}                               & \underline{5.15}                        & \underline{33.02}   &  \textbf{58.78}  & \textbf{56.76}   &   \underline{73.23} & \textbf{55.45}
                           \\\midrule
 \multirow{3}{*}{Medium}                          & Linear-Quant (\textbf{8})                                              & 13.44 GB / 14.97 s     & N/A                     &  N/A  &  N/A  & N/A   & N/A & N/A  & N/A \\\cmidrule(rr){2-10}
                                                & Bnb.Adam (\textbf{8})                                             & 13.44 GB / 10.47 s     & \underline{2.421}                     &  \textbf{5.14} &  \underline{33.02}  & 58.64   & 56.51 & 73.19  & 55.34 \\\cmidrule(rr){2-10}
                                                & $\pi$-Quant (\textbf{6.64})                                            & 13.58 GB / 11.70 s  & \underline{2.421}                               & 5.16                        & \underline{33.02}   &  \underline{58.75}  & \textbf{56.76}   &   73.06 & \underline{55.40}\\\midrule

\multirow{2}{*}{Low}   & Lpmm.Adam (\textbf{4})                                        &      12.30 GB / 10.66 s                   &  2.422  &  5.41  &  31.23  & 58.47  & 55.19 & 72.85 & 54.44 \\
\cmidrule(rr){2-10}
                           & {$\pi$-Quant} (\textbf{3.32}) & 11.32 GB / 10.56 s          &   2.422 &  \textbf{5.14}  &  \textbf{33.53}  & 58.45 & 56.05 & \textbf{73.34} & 55.34\\

\bottomrule
\end{tabular}}
\end{table*}


\paratitle{Results}. As shown in Table~\ref{tab:ppl}, we can observe that Linear-Quant~\cite{vanhoucke2011improving} exhibits the poorest performance among all compared methods, with its training process being highly unstable and frequently resulting in catastrophic failures.
Besides, Bnb~\cite{dettmers20218} exhibits remarkable robustness, achieving performance metrics on par with the full-precision baseline.
However, Lpmm Adam~\cite{li2023memory} shows a notable degradation in performance metrics, even though its training loss is not much different from full precision.
Notably, $\pi$-Quant achieves lossless performance in both 13.28-bit and 6.64-bit settings, with metrics even surpassing those of the full-precision baseline.
Remarkably, our method reduces state bit-widths to 3.32 bits while maintaining similar convergence trend (Figure~\ref{fig:loss}) and performance metrics.
These results show that our rotation-based quantization method can effectively reduce the quantization loss, serving as a robust replacement for full-precision optimizers.

Besides, we report the training memory usage and the optimization step time in Table~\ref{tab:ppl}.
As for training memory, we can observe that Bnb achieves a 30.8\% reduction in memory usage compared to the FP32 Adam; Lpmm can further reduce memory usage due to its lower precision representation. 
In comparison, our method consumes the least memory, achieving a 41.9\% reduction in memory usage.
As for training speed, quantization-based methods introduce additional quantization and de-quantization operations during training, leading to relatively higher latency.
Notably, Bnb and Lpmm Adam have compiled GPU binary search kernels, achieving comparable quantization latency.
Since our method requires calculating parameter angles during Quant and Restore stages, resulting in additional computational overhead, which is a limitation of our approach.

\begin{figure}[h]
\vskip 0.2in
\begin{center}
\centerline{\includegraphics[width=.85\columnwidth]{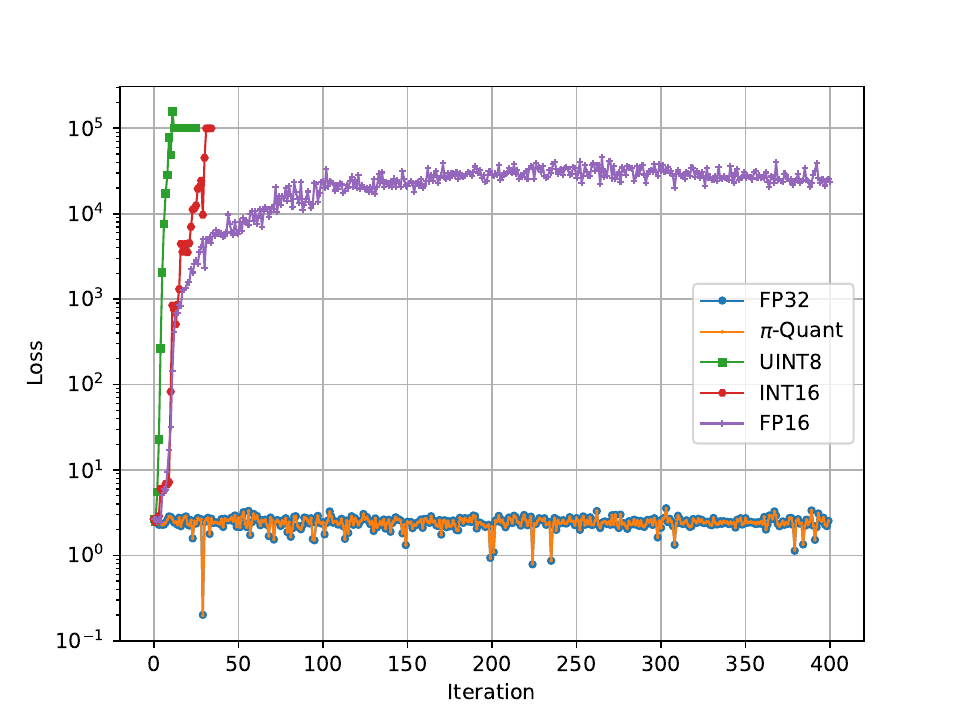}}
\caption{Loss comparison between FP32 and our approach.}
\label{fig:loss}
\end{center}
\vskip -0.2in
\end{figure}

\subsection{Experimental Analysis}

\subsubsection{Accuracy on Other Downstream Tasks}

\paratitle{Experimental Settings.} We evaluate $\pi$-Quant on five classification tasks on long-range-arena  benchmark~\cite{tay2020longrangearenabenchmark}, including Listops~\cite{nangia2018listopsdiagnosticdatasetlatent}, Text classification on IMDb review dataset~\cite{maas-etal-2011-learning}, Document Retrieval on AAN dataset~\cite{radev-etal-2009-acl}, Pathﬁnder~\cite{linsley2019learninglongrangespatialdependencies}, and Image
classification on CIFAR-10~\cite{krizhevsky2009learning}.
Besides, we also introduce two sequence-to-sequence tasks, including text summarization on the Samsum~\cite{gliwa2019samsum} dataset and sequential recommendation on the Movielens~\cite{harper2015movielens} dataset.
Specifically, we use the standard transformer as the backbone, and the implementation details follow the work~\cite{chen2021skyformer}.
We report the classification accuracy for the long-range-arena tasks, and report the belu scores and recommendation metrics for the other tasks.

\paratitle{Results.} We can observe that Bnb~\cite{dettmers20158} shows significant accuracy drops on Image, Retrieval, Listops, Summary and RecSys tasks, but surpasses full-precision on Text and PathFinder tasks.
Besides, Lpmm~\cite{li2023memory} shows suboptimal performance across most tasks, with only comparable results on PathFinder..
These results suggest that the predefined quantization precision in these methods is not always optimal across different tasks.
In comparison, we observe that $\pi$-Quant outperforms the original Adam across all tasks, suggesting that the rotation quantization may represent a superior optimization setting.

\begin{table}[!t]
  \centering
  \small
  \captionsetup{font={small}}
  \caption{Performance comparison on other machine learning tasks.} 
  \label{tab:fis}
  \resizebox{.95\columnwidth}{!}{
  \begin{tabular}{c|c|cccc}
    \toprule
    \textbf{Task} & \textbf{Metric} & \textbf{Adam} & \textbf{Bnb} & \textbf{Lpmm} & \textbf{$\pi$-Quant} \\
    \hline \hline
    \multirow{2}{*}{{\tabincell{c}{Text}}} & Acc. & \underline{64.63} & \textbf{64.68} & 64.14 & \textbf{64.68}\\
    & Loss  & 0.634 & \textbf{0.630} & 0.635 & \underline{0.631}\\
    \hline
    \multirow{2}{*}{\tabincell{c}{Image}} & Acc. & \underline{40.13} & 39.71 & 39.92 & \textbf{41.22}\\
    & Loss  & 1.896 & \underline{1.763} & 1.771 & \textbf{1.745}\\
    \hline
    \multirow{2}{*}{\tabincell{c}{Retrieval}} & Acc. & \underline{80.70} & 79.92 & 79.19 & \textbf{80.79} \\
    & Loss  & \textbf{43.61} & 45.21 & 45.93 & \underline{44.32}\\
    \hline
    \multirow{2}{*}{\tabincell{c}{Listops}} & Acc. & \underline{38.21} & 37.30 & 37.30 & \textbf{38.51} \\
    & Loss  & 1.780 & \underline{1.715} & 1.741 & \textbf{1.646}\\
    \hline
    \multirow{2}{*}{\tabincell{c}{PathFinder}} & Acc. & 69.15 & 70.83 & \textbf{72.05} & \underline{71.92}\\
    & Loss  & 55.13 & \underline{53.91} & \textbf{49.91} & 58.57\\
    \hline
    \multirow{4}{*}{{\tabincell{c}{Text\\Summary}}} & Blue-1 & \textbf{14.42} & 13.34 & 13.71 & \underline{14.32}\\
    & Blue-2  & \underline{8.30} & 7.68 & 7.95 & \textbf{8.72}\\
    & Blue-3  & \underline{4.03} & 3.87 & 3.99 & \textbf{4.74}\\
    & Blue-4 & \underline{1.54}   & 1.61 & 1.57 & \textbf{2.02}\\
    \hline
    \multirow{4}{*}{{\tabincell{c}{RecSys}}} & Recall@10 & \underline{28.19} & 27.75 & 28.01 & \textbf{28.76}\\
    & Ndcg@10  & \underline{15.51} & 15.34 & 15.41 & \textbf{15.85}\\
    & Mrr@10  & \underline{11.66} & 11.58 & 11.53 & \textbf{11.94}\\
    & Hit@10 & \underline{28.19}   & 27.75 & 28.01 &  \textbf{28.76}\\
  \bottomrule
\end{tabular}}
\vspace{-1em}
\end{table}

\subsubsection{Ablation Study}
We conduct ablation studies to explore the impact of $\bar{\pi}$ settings in our approach. 
As introduced in Section~\ref{sec:irr}, the design of $\bar{\pi}$ influences the accuracy of solving for \( \theta \) using Eq.~\eqref{eq:m}.
To verify this conclusion, we change the value of $\bar{\pi}$ as two forms: (1) as a truncated rational number, \ie $\bar{\pi}_1 = 0.001$ and (2) setting its integer part to other values, \eg $\bar{\pi}_2 = 3.0010003589792...$.
Specifically, We focus on analyzing quantization errors for two distributions: Gaussian (\ie $\mathcal{N}(0, 1)$) and uniform distributions (\ie $\mathbb{U}(0, 1)$).
As shown in Figure~\ref{fig:alb}, we can see that our method achieves lower error in fitting Gaussian distributions, revealing its non-uniform fitting capability.
In addition, $\bar\pi$ has a lower error than $\bar\pi_1$ as its decimal portion provides error compensation.
Besides, we find that setting the integer part of $\bar\pi$ to zero yields minimal error..

\begin{figure}[ht]
\vskip 0.2in
\begin{center}
\centerline{\includegraphics[width=0.7\columnwidth]{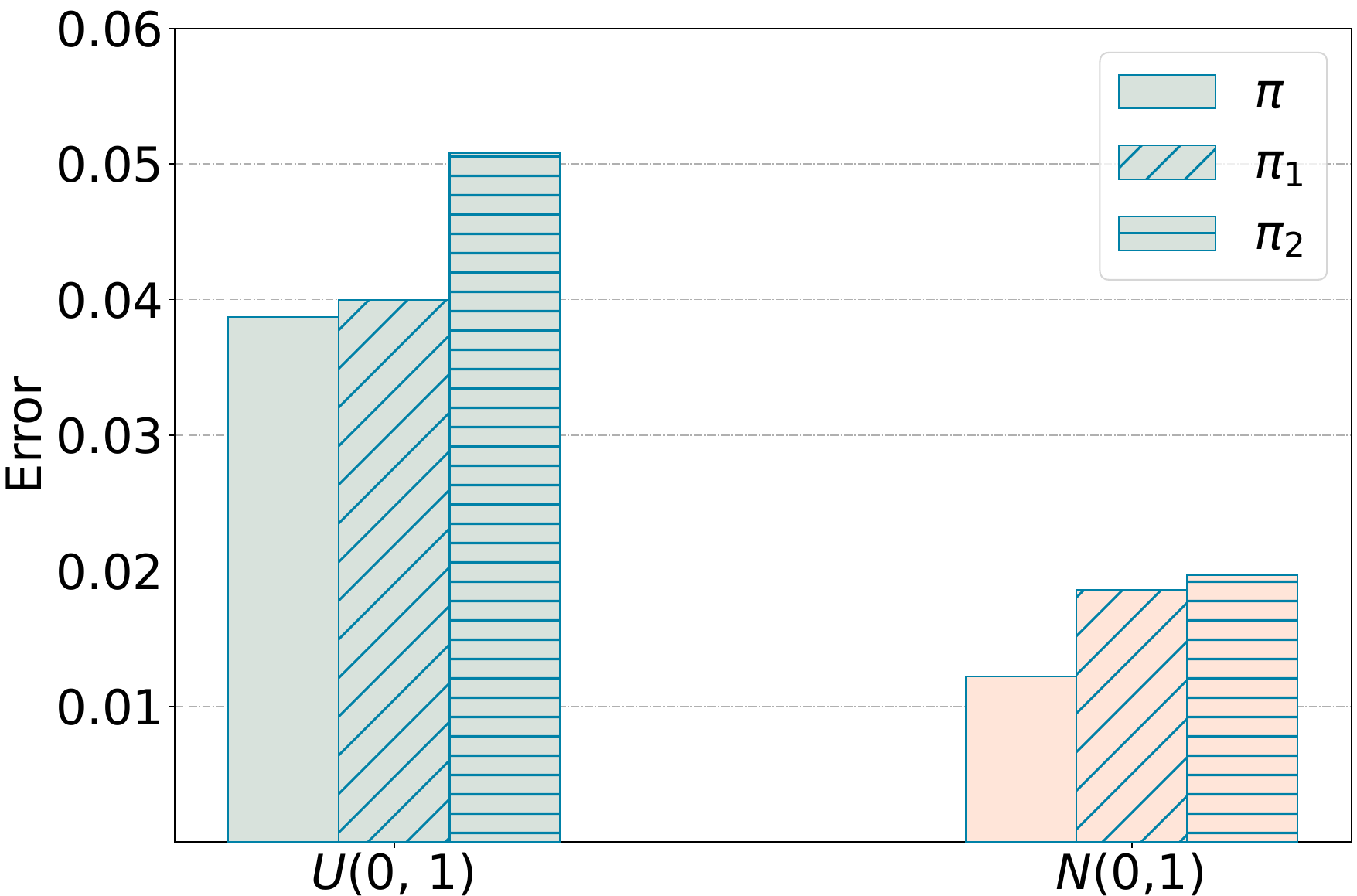}}
\caption{Ablation Study of the $\pi$ setting in our approach.}
\label{fig:alb}
\end{center}
\vspace{-1em}
\end{figure}

\subsubsection{Visualizing the Gradient Update Process}
To have an intuitive understanding of our approach, we visualize our approach's gradient descent through a simple task: minimizing the Himmelblau's function\footnote{$f(x, y) = (x^2 + y - 11)^2 + (x + y^2 - 7)^2$}.
As shown in Figure~\ref{fig:vis},  we establish multiple starting points and compared the training processes across different optimizers, including Adam with varying bit-widths, SGD, and our proposed method.
We can observe that Adam's optimization path is typically shorter than SGD's, demonstrating how Adam's momentum accelerates convergence.
Our method achieves the shortest paths across all initialization points, particularly at start point 4 where it successfully finds the correct minimum while Adam becomes trapped in a local saddle point.
These findings demonstrate that our rotation quantization method can enhance convergence speed in gradient descent and facilitates escape from local minima and saddle points.
\begin{figure}[h]
\vskip 0.2in
\begin{center}
\centerline{\includegraphics[width=.9\columnwidth]{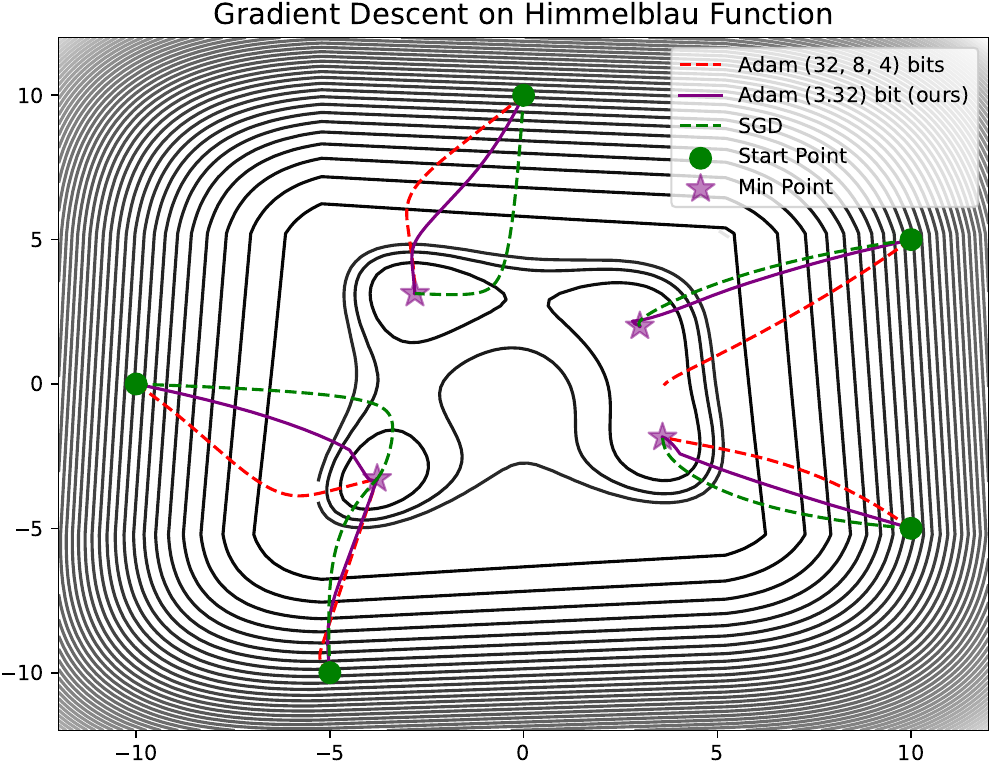}}
\caption{Visualizing the gradient descent process.}
\label{fig:vis}
\end{center}
\vspace{-1em}
\end{figure}
\section{Related Work}

\paratitle{Model Quantification.}
Large language models~\cite{zhao2023survey} have achieved outstanding performance across various tasks.
However, the extensive parameters in LLMs results in substantial memory usage, significantly increasing the cost of deploying them in real-world scenarios.
To overcome this challenge, a number of model quantization methods have been proposed for reducing the memory footprint of LLMs, by mapping floating-point numbers to integers~\cite{gholami2022survey}.
Typically, there are two classes of quantization approaches, namely \emph{training-oriented quantization} and post-training quantization (PTQ).
Among them, most PTQ work primarily focuses on separately handling  outliers, \ie extremely large values in activations~\cite{dettmers2022gpt3} or sensitive weights~\cite{lin2024awq, shang2023pb, dettmers2023spqr, guan2024aptq, lee2024owq}.
Specially, SmoothQuant~\cite{xiao2023smoothquant} propose a per-tensor quantization method for compressing the weights and activations of Transformers in a coarse-grained view, and ZeroQuant~\cite{yao2022zeroquant} improve it by a token-wise quantization approach in a fine-grained view.
Despite the progress, PTQ approaches often require additional data and fine-tuning for retaining the performance.
As such, a number of approaches~\cite{liu2023llm, wang2023bitnet} propose to conduct the quantization during training.
Through extensive experiments on LLaMA, their approach demonstrates promising results for weight and activation quantization, but remains unsuitable for momentum quantization.
To this end, Bnb~\cite{dettmers20218} and Lpmm~\cite{li2023memory} propose a search-based approach to quantize the momentum to the frequently occuring elements.
However, these pre-set elements cannot adapt well to all tasks, which may lead to degraded performance in certain tasks.

\paratitle{Optimization with Complex Rotations.}
Traditional neural networks are typically modeled in a real vector space, which exhibits certain constraints when conducting mathematical operations such as exponentiation and vector rotations.
To enhance the capability of representation learning, a number of approaches~\cite{tian2023eulernet, tian2024eulerformer, tian2024rotative, su2024roformer, tang2022image,li2023angle} have proposed conducting representation learning in complex vector space.
Typically, rotation techniques in complex space have been widely applied in the modeling of language models.
Specifically, RoPE~\cite{su2024roformer} employ complex rotation to inject the position information into the attention mechanism; EulerFormer~\cite{tian2024eulerformer} fully complexifies the Query and Key, enabling adaptive modeling of semantic and positional information; xPos~\cite{sun2022length} employs a two-dimensional pairwise rotation technique to enhance the positional embeddings in Transformers.
Unlike these work, we attempt to utilize complex rotations based on irrational numbers for quantization, allowing it to exhibit a non-uniform distribution for parameter optimization.
\section{Conclusion}

In this paper, we proposed a novel optimizer state compression approach $\pi$-Quant.
Different from prior work, $\pi$-Quant compressed the parameters using a complex rotation scheme, by leveraging the properties of irrational numbers according to our proposed mathematical findings.
In $\pi$-Quant, the two dimensional parameter pairs were efficiently converted into a single rotation angle with our proposed system of geometric equations, which precisely halved the parameter scale with linear complexity.
Further, $\pi$-Quant introduced an effective quantization algorithm, which reduced the precision of the rotation angles, all while maintaining full accuracy.
In principle, $\pi$-Quant possessed lower quantization error and demonstrated a non-uniform precision distribution, which served as an effective replacement of full precision optimizers.
As the further work, we will consider reducing the quantization complexity, and testing the capacity of $\pi$-Quant in more compression scenarios, \eg KV cache compression in the large language models.

\bibliography{example_paper}
\bibliographystyle{icml2025}

\newpage
\appendix
\newpage
\section{Proof of Theorem~\ref{thm:irr}.}
\label{sec:app1}
Note that the Eq.~\ref{eq:irr} can be written in the form of the following function $f: \theta \rightarrow (x, y)$, by using Euler's formula (\ie $e^{ix} = \cos x + i\sin x$):

\begin{align}\label{eqs:paraeq}
\left\{
\begin{aligned}
x &= \cos\theta + \cos \bar{\pi}\theta \\
y &= \sin\theta + \sin \bar{\pi}\theta \\ 
\end{aligned}
\right.
\end{align}

We first prove that the function of Eq.~\ref{eqs:paraeq} is non-periodic, which is equivalent to proving the following lemma:

\begin{lemma}
\label{lem:usefullemma}
There does not exist a period $T$ such that the following equation holds for all $\theta \in \mathbb{R}$:
\begin{align}\label{eqs:priod}
\left\{
\begin{aligned}
\cos\theta + \cos \bar{\pi}\theta &= \cos(\theta + T) + \cos (\bar{\pi}(\theta + T)) \\
\sin\theta + \sin \bar{\pi}\theta &= \sin(\theta + T) + \sin (\bar{\pi}(\theta + T)) \\ 
\end{aligned}
\right.
\end{align}
\end{lemma}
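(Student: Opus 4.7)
The plan is to collapse the two real equations of Lemma~\ref{lem:usefullemma} into a single complex equation by combining them as real and imaginary parts. Multiplying the second equation by $i$ and adding it to the first, Euler's formula gives the equivalent statement
\begin{equation*}
e^{i\theta} + e^{i\bar{\pi}\theta} \;=\; e^{i(\theta+T)} + e^{i\bar{\pi}(\theta+T)} \quad \text{for all } \theta \in \mathbb{R}.
\end{equation*}
Working in this complex form is much cleaner than juggling the sine and cosine sum-to-product identities, and it exposes the algebraic role of $\bar{\pi}$ directly.

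Next I would rearrange the equation to isolate the $\theta$-dependence on each side:
\begin{equation*}
e^{i\theta}\bigl(1 - e^{iT}\bigr) \;=\; e^{i\bar{\pi}\theta}\bigl(e^{i\bar{\pi}T} - 1\bigr).
\end{equation*}
The key observation is that the left-hand side has angular frequency $1$ in $\theta$, while the right-hand side has angular frequency $\bar{\pi} \neq 1$ (since $\bar{\pi}$ is irrational). For this identity to hold for every $\theta$, both sides must vanish identically, otherwise dividing through would force $e^{i(\bar{\pi}-1)\theta}$ to equal a nonzero constant, which is impossible.

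The conclusion then rests on the irrationality of $\bar{\pi}$. From $1 - e^{iT} = 0$ we obtain $T = 2k\pi$ for some $k \in \mathbb{Z}$, and from $e^{i\bar{\pi}T} - 1 = 0$ we obtain $\bar{\pi}T = 2m\pi$ for some $m \in \mathbb{Z}$. Substituting the first into the second yields $\bar{\pi} k = m$, so if $k \neq 0$ then $\bar{\pi} = m/k \in \mathbb{Q}$, contradicting the irrationality of $\bar{\pi}$. Hence $k = 0$, forcing $T = 0$, which is not a valid period.

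The main obstacle I anticipate is justifying cleanly that the rearranged equation cannot hold for all $\theta$ unless both factors vanish: a careful argument is needed to rule out the case where the two sides are a nonzero constant times each other, since one might worry about hidden cancellations. The cleanest way is to observe that if $1 - e^{iT} \neq 0$, then the function $\theta \mapsto e^{i(\bar{\pi}-1)\theta}$ would be constant, but this function has nonzero derivative $i(\bar{\pi}-1)e^{i(\bar{\pi}-1)\theta}$ everywhere (since $\bar{\pi} \neq 1$), giving the required contradiction without further casework.
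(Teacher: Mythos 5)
Your proof is correct, and it reaches the conclusion by a genuinely different route from the paper. The paper keeps the two real equations, differentiates the assumed identity with respect to $\theta$, and takes a linear combination of the original system and its derivative to decouple the frequency-$1$ and frequency-$\bar{\pi}$ components; this yields $T = 2q\pi$ and $T = 2p\pi/\bar{\pi}$ separately, whence $\bar{\pi} = p/q$ is rational, a contradiction. You instead recombine the system into the single complex identity $e^{i\theta}+e^{i\bar{\pi}\theta}=e^{i(\theta+T)}+e^{i\bar{\pi}(\theta+T)}$ and decouple algebraically, rewriting it as $e^{i\theta}\bigl(1-e^{iT}\bigr)=e^{i\bar{\pi}\theta}\bigl(e^{i\bar{\pi}T}-1\bigr)$ and noting that a non-constant exponential $e^{i(\bar{\pi}-1)\theta}$ cannot equal a constant, so both factors vanish. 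The two arguments are logically equivalent at the final step (in both, $T$ is forced to be a common period of the two exponentials, which contradicts irrationality), but your rearrangement avoids calculus entirely and is arguably cleaner; it also explicitly disposes of the degenerate case $k=0$, i.e.\ $T=0$, which the paper's conclusion $\bar{\pi}=p/q$ silently assumes away ($q=0$ would correspond to $T=0$, not a period). One cosmetic caveat: the statement as written does not explicitly exclude $T=0$, so your closing remark that $T=0$ ``is not a valid period'' is doing real work and should be kept.
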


\begin{proof} 
By contradiction, assume that there exists a $T$ such that the above equation holds universally.
Differentiate both sides of Eq.~\eqref{eqs:priod}, \ie $\nabla f(\theta) = \nabla f(\theta + T)$:
\begin{align}\label{eqs:diff}
\left\{
\begin{aligned}
\sin\theta + \bar{\pi}\sin\bar{\pi}\theta &= \sin(\theta + T) + \bar{\pi}\sin (\bar{\pi}(\theta + T)) \\
\cos\theta + \bar{\pi}\cos\bar{\pi}\theta &= \cos(\theta + T) + \bar{\pi}\cos (\bar{\pi}(\theta + T)) \\ 
\end{aligned}
\right.
\end{align}
Subtract Eq.~\eqref{eqs:diff} from Eq.~\eqref{eqs:priod}:
\begin{align}\label{eqs:sub}
\left\{
\begin{aligned}
(\bar{\pi}-1)\sin\bar{\pi}\theta &= (\bar{\pi}-1)\sin (\bar{\pi}(\theta + T)) \\
(\bar{\pi}-1)\cos\bar{\pi}\theta &= (\bar{\pi}-1)\cos (\bar{\pi}(\theta + T)) \\ 
\end{aligned}
\right.
\end{align}
Eliminate the common terms:
\begin{align}\label{eqs:pit}
\left\{
\begin{aligned}
\sin\bar{\pi}\theta &= \sin (\bar{\pi}(\theta + T)) \\
\cos\bar{\pi}\theta &= \cos (\bar{\pi}(\theta + T)) \\ 
\end{aligned}
\right.
\end{align}
This indicates that $T$ must be an integer multiple of the smallest positive period of $\sin\bar{\pi}\theta$, \ie $T = 2p{\pi}/{\bar{\pi}}, p \in \mathbb{Z}$.
In additional, by subtracting Eq.~\eqref{eqs:pit} from Eq.~\eqref{eqs:priod}, we have:
\begin{align}\label{eqs:opit}
\left\{
\begin{aligned}
\sin\theta &= \sin (\theta + T) \\
\cos\theta &= \cos (\theta + T) \\ 
\end{aligned}
\right.
\end{align}
This indicates that $T$ must be an integer multiple of the smallest positive period of $\sin\theta$, \ie $T = 2q{\pi}, q \in \mathbb{Z}$.
Therefore, we have $T = 2q{\pi} = 2p{\pi}/{\bar{\pi}} \Longrightarrow {\bar{\pi}} = p / q$, which contradicts the fact that $\bar{\pi}$ is an irrational number.
Therefore, the original lemma holds.
\end{proof}
At the same time, it is easy to prove that the function of Eq.~\eqref{eqs:paraeq} is bounded: $|| x^2 + y^2|| = ||2 + 2\cos((1 - \bar{\pi})\theta)|| \leq 4$.
Therefore, the function's trajectory will traverse every point within the domain's circle, since it is non-closed and continuous.
In Figure~\ref{fig:vis}, we visualize the trajectory of Eq.~\eqref{eqs:paraeq} as $\theta$ varies in different ranges.
This reflects that when the range of $\theta$ is sufficiently large, any arbitrary two-dimensional point can be uniquely mapped to its corresponding $\theta$.

\begin{figure}[!h]
  \centering
  \captionsetup{font={small}}
  \subcaptionbox{$\theta \in [0, 100]$}{
    \includegraphics[width=0.3\linewidth]{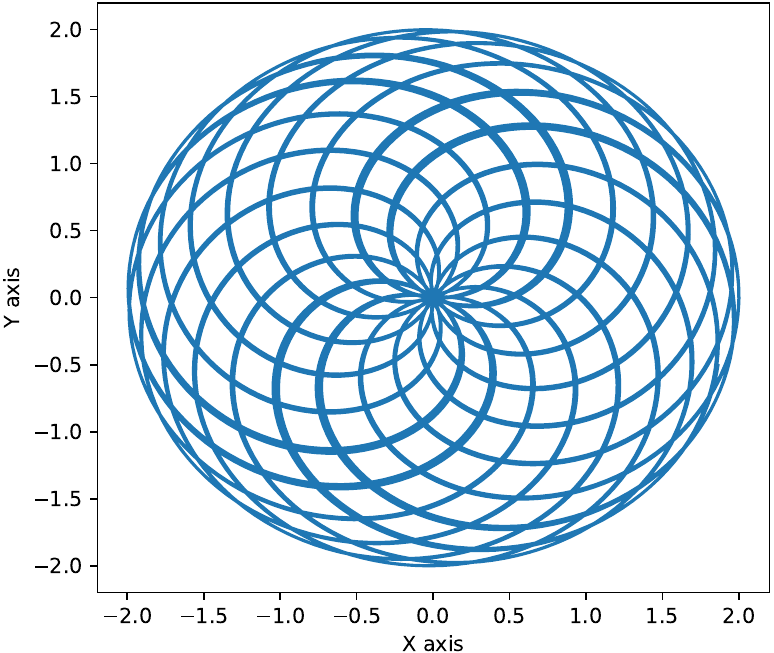}
  }
  \subcaptionbox{$\theta \in [0, 300]$}{
    \includegraphics[width=0.3\linewidth]{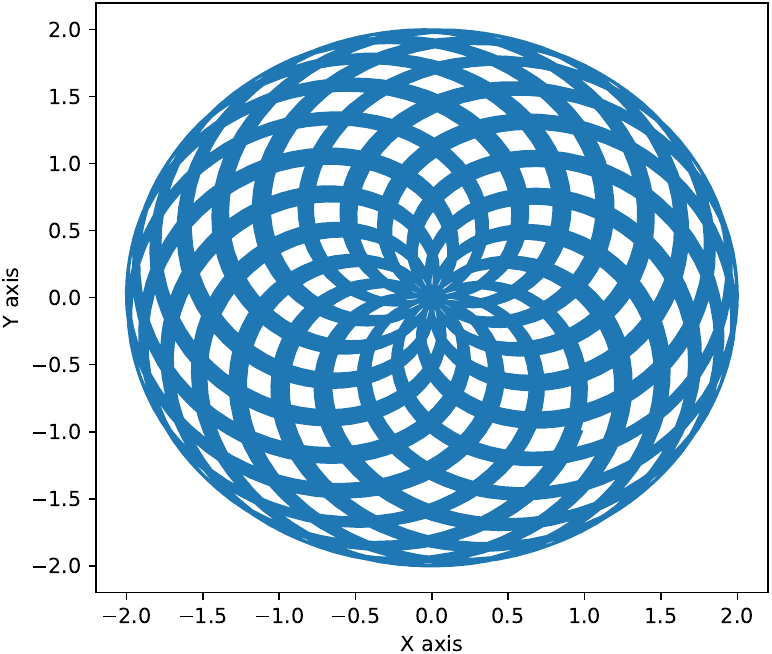}
  }
  \subcaptionbox{$\theta \in [0, 1000]$}{
    \includegraphics[width=0.3\linewidth]{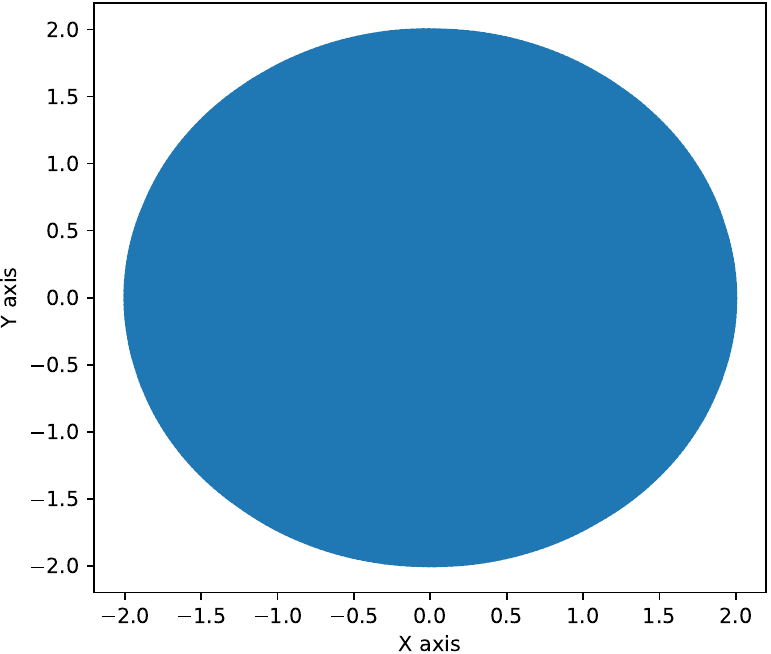}
  }
  \captionsetup{font={small}}
  \caption{Visualization of the trajectory of Function Eq.~\eqref{eqs:paraeq}.}
  \label{fig:vis}
\end{figure}

\section{Proof of Lemma~\ref{lem:geolemma}.}
\label{app:gemo}

\begin{figure}[ht]
\vskip 0.2in
\begin{center}
\centerline{\includegraphics[width=0.7\columnwidth]{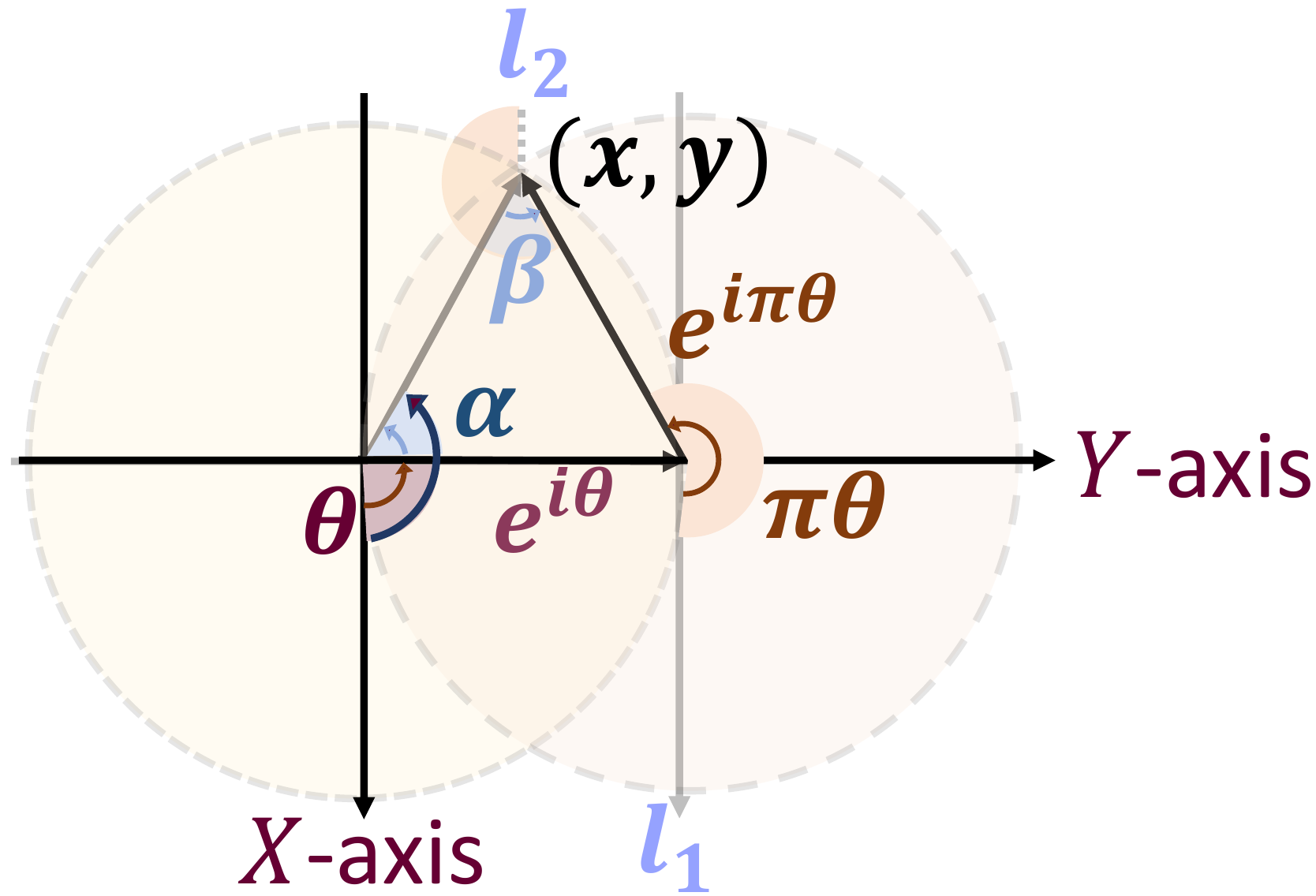}}
\caption{Geometric relationship in $x+iy = e^{\i\theta} + e^{i\pi\theta}$}
\label{fig:gemo}
\end{center}
\vskip -0.2in
\end{figure}

As shown in Figure~\ref{fig:gemo}, according to the definition of tangent function, we have:
\begin{align}\label{eq:11}
\alpha = \mathrm{arctan}(\frac{y}{x}).
\end{align}

Furthermore, since the magnitudes of $e^{i\theta}$ and $e^{i\pi\theta}$ are both 1, according to the cosine theorem, we have:

\begin{align}\label{eq:12}
\beta = \arccos(\frac{x^2 + y^2 + 1 - 1}{2 \cdot \sqrt{x^2 + y^2} \cdot 1}) = \arccos(\frac{\sqrt{x^2 + y^2}}{2}).
\end{align}

In particular, we also know that the triangle formed by $e^{i\theta}$, $e^{i\pi\theta}$, and $Oxy$ is an isosceles triangle. According to the angular relationship, we can easily deduce that:

\begin{align}\label{eq:13}
\alpha - \beta = \theta + 2m\pi,~~ m \in \mathbb{Z}.
\end{align}

Additionally, since $l_1$ and $l_2$ are parallel, we have:

\begin{align}\label{eq:14}
\alpha + \beta = \pi\theta + 2k\pi,~~ k \in \mathbb{Z}.
\end{align}

Therefore, Lemma~\ref{lem:geolemma} is proved by combing Eq.~\eqref{eq:11} ~ Eq.~\eqref{eq:14}.

\section{Proof of Eq.~\eqref{eq:error}.}\label{app:pro}

Recall that $x = \cos \theta + \cos \bar{\pi} \theta$, and we have:

\begin{align}\label{eqs:pp}
\nonumber
\begin{aligned}
||\Delta x|| &= || \cos(\theta + \Delta \theta) - \cos \theta +  \cos(\bar{\pi} (\theta + \Delta \theta)) -  \cos \bar{\pi} \theta|| \\
&= \Big|\Big|\int_{\theta}^{\theta + \Delta\theta}(\sin\theta + \bar{\pi}\sin\bar{\pi}\theta) d\theta\Big|\Big| \\
&\leq \int_{\theta}^{\theta + \Delta\theta}\Big|\Big|(\sin\theta + \bar{\pi}\sin\bar{\pi}\theta) \Big|\Big|d\theta
\end{aligned}
\end{align}

Let $\theta' = \textrm{argmax}_{\theta}{\Big|\Big|(\sin\theta + \bar{\pi}\sin\bar{\pi}\theta) \Big|\Big|}$, we have:

\begin{align}
\begin{aligned}
\Delta x &< \Delta \theta \Big|\Big|\sin\theta' + \bar{\pi}\sin\bar{\pi}\theta' \Big|\Big| \\
&=\Delta \theta\Big|\Big|(1 - \bar{\pi})\sin\theta' + \bar{\pi}(\sin\theta' + \sin\bar{\pi}\theta') \Big|\Big|\\
&= \Delta\theta\Big|\Big|(1 - \bar{\pi})\sin\theta' + \bar{\pi}y'\Big|\Big|
\end{aligned}
\end{align}

Since $|| y' || < 1$ (See Eq.~\eqref{eq:scale}), thus we have:
\begin{align}
\begin{aligned}
\Delta x &< \Delta \theta \Big|\Big|(1 - \bar{\pi}) + \bar{\pi} \Big|\Big| = \Delta\theta
\end{aligned}
\end{align}

Similarly, we can prove that $\Delta y < \Delta\theta\Big|\Big| (1 - \bar{\pi})\cos\theta' + \bar{\pi}x'\Big|\Big| < \Delta \theta$.

\section{Proof of Eq.~\eqref{eq:avg}.}\label{app:avg}
Note that the angle space is $[0, 2\pi10^{\lambda}]$, remark $M = 2\pi10^{\lambda}$, and we have:

\begin{align}
\begin{aligned}
\mathbb{E}(\Delta x) &= \frac{1}{M}\int_{0}^{M}{10^{-\lambda}} ||\sin\theta + \bar\pi\sin\bar\pi \theta|| d\theta \\
&< \frac{10^{-\lambda}}{M} (\int_{0}^{M}||\sin\theta||d\theta + \int_{0}^{M}\bar\pi||\sin\bar\pi\theta||d\theta)\\
&= \frac{10^{-\lambda}}{M} (\frac{2\cdot M}{\pi} + \frac{2\cdot M \cdot \bar\pi}{\pi})\\
&= \frac{2 \cdot (1 + \bar{\pi}) \cdot 10^{-\lambda}}{\pi}
\end{aligned}
\end{align}

Similarly, we have:

\begin{align}
\begin{aligned}
\mathbb{E}(\Delta y) &= \frac{1}{M}\int_{0}^{M}{10^{-\lambda}} ||\cos\theta + \bar\pi\cos\bar\pi \theta|| d\theta \\
&< \frac{10^{-\lambda}}{M} (\int_{0}^{M}||\cos\theta||d\theta + \int_{0}^{M}\bar\pi||\cos\bar\pi\theta||d\theta)\\
&= \frac{10^{-\lambda}}{M} (\frac{2\cdot M}{\pi} + \frac{2\cdot M \cdot \bar\pi}{\pi})\\
&= \frac{2 \cdot (1 + \bar{\pi}) \cdot 10^{-\lambda}}{\pi}
\end{aligned}
\end{align}

\section{Explanation of Eq.~\eqref{eq:m}.}\label{app:exp}

Note that $\bar{\pi}$ can be written as: $\bar{\pi} = 10^{-\lambda} + 10^{-2\lambda} \cdot 0.3589793238$:
\begin{align}\label{eq:1}
\bar{\pi} = 0.\underbrace{000\cdots1}_{10^{-\lambda}}\underbrace{0000\cdots}_{10^{-2\lambda}}\underbrace{3589793238\cdots}_{\text{same as}~\pi}.
\end{align}

Since $m$ retains the first $\lambda$ decimal places of $\Omega$, we have: 

\begin{align}\label{eq:2}
m \cdot \bar{\pi} = \underbrace{m \cdot 10^{-\lambda}}_{\text{first $\lambda$ decimal of $\Omega$}} + \underbrace{m \cdot 10^{-2\lambda} \cdot 0.3589793238}_{\text{compensation part}}.
\end{align}

In this setting, we can guarantee that the difference between $m\cdot \bar{\pi}$ and $\Omega$ does not exceed $10^{-\lambda}$.
To further understand the construction of $m$, we present an illustrative example with $\Omega = 1.97525751858\cdots$.
If we set $\lambda = 4$, then $m = \lfloor\{\Omega\} \times 10^{4}\rfloor = 9752$ (first 4 decimal of $\Omega$).
We have:

\begin{align}\label{eq:3}
m \cdot \bar{\pi} = 0.\underbrace{9752}_{m \cdot 10^{-\lambda}}\underbrace{350076636569}_{\text{compensation part}} \approx \{\Omega\}.
\end{align}

The error is $0.000022510916...$.


\end{document}